\newtheorem{theorem}{Theorem}[section]
\newtheorem{proposition}[theorem]{Proposition}
\newtheorem{corollary}[theorem]{Corollary}
\newtheorem{definition}[theorem]{Definition}
\newtheorem{assumption}[theorem]{Assumption}
\newtheorem{remark}[theorem]{Remark}
\lstdefinestyle{prompt}{
  basicstyle=\ttfamily\small,
  columns=fullflexible,
  breaklines=true,
  frame=single
}
\title{ Batch-of-Thought: Cross-Instance Learning for Enhanced LLM Reasoning}
\title{ Batch-of-Thought: Cross-Instance Learning for Enhanced LLM Reasoning}
\author{
  \textbf{Xuan Yang\thanks{Work done during internship at ByteDance Inc.}\textsuperscript{1,2}, Furong Jia\textsuperscript{1}, Roy Xie\textsuperscript{1},} \\
  \textbf{Xiong Xi\textsuperscript{2},Hengwei Bian\textsuperscript{2}, Jian Li\textsuperscript{2}, Monica Agrawal\textsuperscript{1}} \\
  \textsuperscript{1}Duke University, \textsuperscript{2}ByteDance Inc. \\
  \texttt{\{xuan.yang, flora.jia, ruoyu.xie, monica.agrawal\}@duke.edu} \\
  \texttt{\{xi.xiong, hengwei.bian, limingjun.tsinghua\}@bytedance.com}
}
\begin{document}
\maketitle

\begin{abstract}
Current Large Language Model reasoning systems process queries independently, discarding valuable cross-instance signals such as shared reasoning patterns and consistency constraints. We introduce Batch-of-Thought (BoT), a training-free method that processes related queries jointly to enable cross-instance learning. By performing comparative analysis across batches, BoT identifies high-quality reasoning templates, detects errors through consistency checks, and amortizes computational costs. We instantiate BoT within a multi-agent reflection architecture (BoT-R), where a Reflector performs joint evaluation to unlock mutual information gain unavailable in isolated processing. Experiments across three model families and six benchmarks demonstrate that BoT-R consistently improves accuracy and confidence calibration while reducing inference costs by up to 61\%. Our theoretical and experimental analysis reveals \emph{when} and \emph{why} batch-aware reasoning benefits LLM systems.\footnote{Code and data can be found in \url{https://github.com/xuanyang19/BoT}.}
\end{abstract}

\section{Introduction}
Large Language Models (LLMs) \cite{achiam2023gpt, madaan2023self, wei2022chain, shinn2023reflexion, yao2022react} have achieved strong performance across diverse tasks and are increasingly applied in domains such as medical reasoning, question answering, and scientific problem solving \citep{singhal2025toward, mcduff2025towards, nori2025sequential, haas2025simpleqa, wang2023scibench, sun2024scieval}. However, producing reliable answers with well-calibrated confidence remains a challenge ~\cite{xiong2024can, ji2023survey}. LLMs often assign high confidence to incorrect answers, which undermines their practical deployment in high-stakes applications where accuracy and reliable uncertainty quantification are essential.
\begin{figure}
    \centering
    \includegraphics[width=\linewidth]{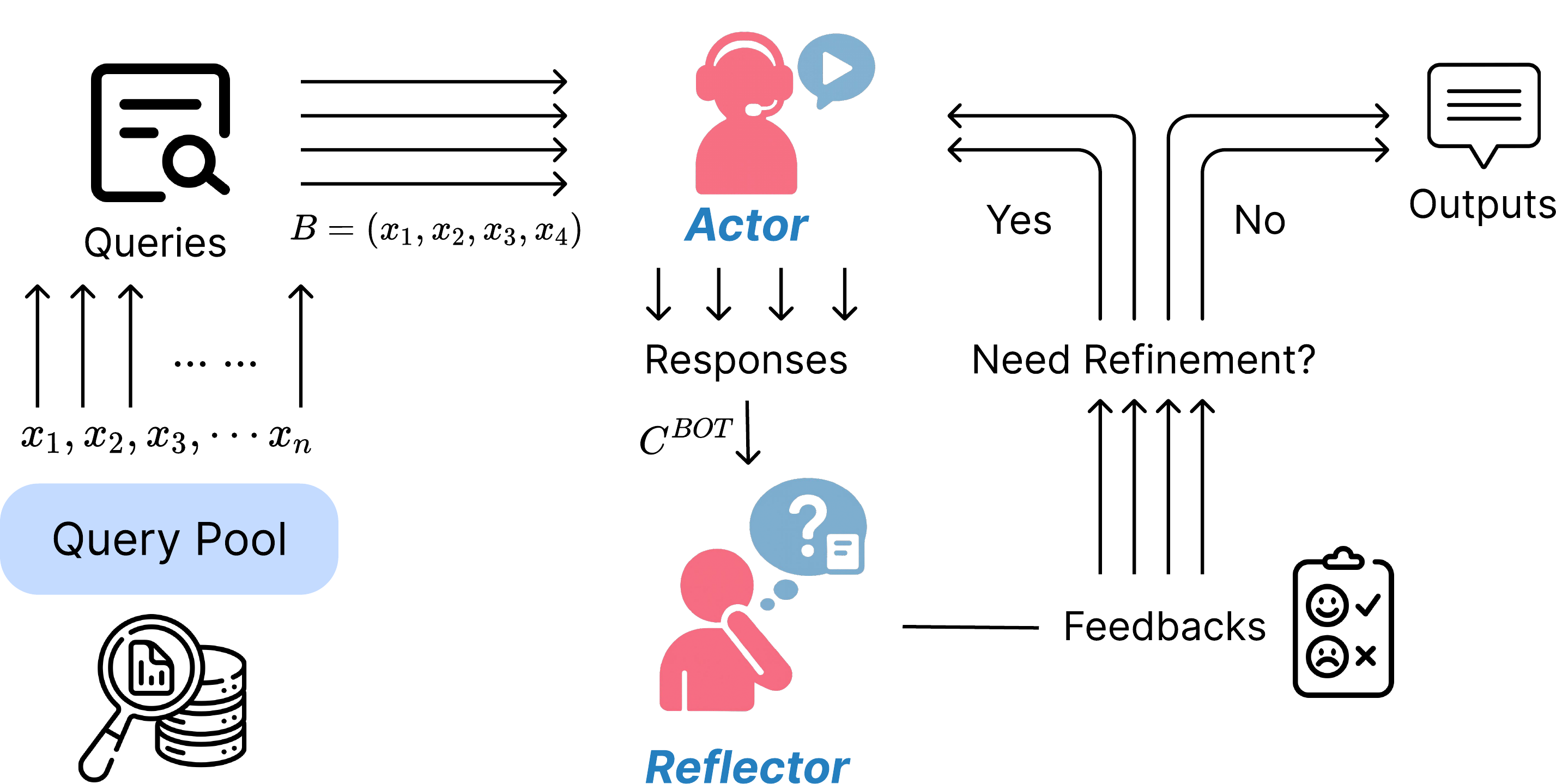}
    \caption{Batch-of-Thought reflection framework. An Actor generates initial responses for a batch of queries. A Reflector then jointly evaluates all responses through comparative analysis, determining whether each should be finalized or refined with feedbacks.}
    \label{fig:intro}
\end{figure}

Multi-agent LLM systems~\citep{li2023camel, chan2024chateval, guo2024large} extend single-model capabilities through specialized roles and iterative refinement. Despite architectural diversity, existing approaches share a fundamental limitation: they process queries independently. While computationally straightforward, this paradigm discards valuable cross-instance signals. When queries share domain characteristics or structural patterns, isolated processing prevents the system from identifying outliers through comparative assessment, propagating validated knowledge from confident instances to uncertain ones, or detecting errors that emerge only through cross-instance consistency checks.

We introduce \textbf{Batch-of-Thought (BoT)}, a training-free framework that processes related queries jointly to enable cross-instance learning and comparative reasoning. Our key insight is that batch-level reasoning unlocks mutual information gain unavailable in isolated processing. By treating queries as a cohort rather than independent instances, BoT enables comparative analysis, reasoning pattern identification, and distributional uncertainty calibration.

To illustrate this principle, consider fraud detection: evaluating a single seller in isolation provides limited signal. Examining a cohort simultaneously reveals recurring suspicious patterns, distinguishes legitimate domain practices from genuine anomalies, and enables comparative evidence assessment. This principle mirrors James–Stein estimation~\cite{james1961estimation, stein1956inadmissibility, efron1977stein}: pooling information across similar instances improves individual estimates through shrinkage toward the cohort distribution (see Appendix~\ref{sec:theory} for theoretical analysis).

We instantiate BoT within a multi-agent reflection architecture~\cite{madaan2023self, shinn2023reflexion}, termed \textbf{BoT-R}, though the principle generalizes to other frameworks including Plan-and-Act~\cite{erdogan2025plan} and Multi-Agent Debate~\cite{du2024improving, liang2024encouraging}. In BoT-R, an Actor generates answer-rationale pairs for a batch of queries, then a Reflector performs joint evaluation through comparative analysis—identifying inconsistencies, extracting shared domain knowledge, assessing relative quality, and suggesting refinements. This approach simultaneously improves reasoning quality and computational efficiency by amortizing reflection overhead across the batch. We summarize our main contributions as follows:

\begin{enumerate}
    \item We propose \textbf{Batch-of-Thought (BoT)}, a training-free method that enhances LLM reasoning by processing related queries as cohesive batches, enabling cross-instance learning unavailable in isolated processing.
    \item We instantiate BoT in a reflection-based multi-agent system and conduct experiments across six benchmarks and three model families, demonstrating consistent accuracy improvements and \textbf{46.9\%} average cost reduction.
    \item We theoretically and experimentally analyze how task characteristics and batch composition influence BoT's effectiveness, revealing that interpretive domains benefit substantially from comparative reasoning while symbolic tasks require careful batch design.
\end{enumerate}
 
\begin{algorithm}[t]
\caption{\textsc{BoT-R}}
\label{alg:bot}
\small
\begin{algorithmic}
\Require Batch $B=\{x_i\}_{i=1}^N$, Actor $\mathcal{A}$, Reflector $\mathcal{R}$, tool set $\mathcal{T}$, max outer rounds $T$, max tool calls $K$
\Ensure Final answers $\{a_i\}_{i=1}^N$ with confidences $\{u_i\}_{i=1}^N$
\State Initialize $c_i \gets \varnothing$, $u_i \gets 0$ for all $i\in[N]$; active set $S \gets [N]$
\For{$t = 1$ \textbf{to} $T$}
    \State \textbf{(Parallel)} $(a_i,\rho_i,\mathsf{traj}_i)\gets \mathcal{A}(x_i,\mathcal{T},c_i,K)$ for all $i\in S$
    \State Build reflective context $\mathcal{C} \gets \langle (x_i,a_i,\rho_i,\mathsf{traj}_i)\rangle_{i=1}^N$
    \State \textbf{(Joint)} $(u_i,r_i,c_i)\gets \mathcal{R}(\mathcal{C},i)$ for all $i\in[N]$
    \If{$\forall i:\ r_i=0 $} 
        \textbf{break}
    \EndIf
    \State $S \gets \{\, i\mid r_i=1 \}$ 
\EndFor
\State \Return $\{a_i\}, \{u_i\}$
\end{algorithmic}
\end{algorithm}

\begin{table*}[t]
\centering
\small
\resizebox{\textwidth}{!}{%
\begin{tabular}{l|l|c|c|c|c|c|c}
\toprule
\textbf{Model} & \textbf{Method} & \textbf{FraudDet}& \textbf{GPQA}& \textbf{Winogrande}& \textbf{MedQA}& \textbf{PubMedQA}& \textbf{SMS Spam}\\
 & & $n=1793$& $n=448$& $n=1267$& $n=1273$& $n=1000$&$n=1510$\\
\midrule
\multirow{3}{*}{GPT-4o} & ReAct & 0.685 & 0.439 & 0.872 & 0.878 & 0.679 & 0.796\\
 & Reflection & 0.693 & 0.459 & 0.879 & 0.901 & 0.667 & 0.854 \\
 & BoT-R & \textbf{0.740} & \textbf{0.488} & \textbf{0.890} & \textbf{0.904} & \textbf{0.698} & \textbf{0.887} \\
\midrule
\multirow{3}{*}{Llama-3.3-70B} & ReAct & 0.635 & 0.494 & 0.831 & 0.783 & 0.753 & 0.920 \\
 & Reflection & 0.679 & 0.504 & 0.853 & 0.797 & 0.755 & \textbf{0.925} \\
 & BoT-R & \textbf{0.713} & \textbf{0.516} & \textbf{0.862} & \textbf{0.804} & \textbf{0.757} & 0.923 \\
\midrule
\multirow{3}{*}{Qwen3-Next-80B} & ReAct & 0.633 & 0.560 & 0.823 & 0.814 & \textbf{0.732} & \textbf{0.946} \\
 & Reflection & 0.639 & 0.636 & 0.869 & 0.846 & 0.681 & 0.900 \\
 & BoT-R & \textbf{0.660} & \textbf{0.657} & \textbf{0.874} & \textbf{0.860} & 0.704 & 0.919 \\
\bottomrule
\end{tabular}%
}
\caption{Performance comparison of reasoning methods across various base models and datasets (number of queries listed as $n$). Scores represent accuracy. BoT-R results report the best performance across batch sizes $N\in\{4,8\}$. The best result in each setting is highlighted in bold. }
\label{tab:main}
\end{table*}
 
\section{Methods}
Batch-of-Thought (BoT) is a training-free, model-agnostic method that jointly processes batches of queries to improve reasoning quality, confidence calibration, and computational efficiency. We formalize the approach and describe its instantiation within a multi-agent reflection architecture.
\subsection{Problem Formulation}
\label{sec:formulation}
 
Let $\mathcal{X}$ and $\mathcal{Y}$ denote input and output spaces. Queries arrive in batches $B = \{x_i\}_{i=1}^N \subset \mathcal{X}$. We employ a two-agent architecture with iterative refinement:
 
\paragraph{Actor $\mathcal{A}$.} A ReAct agent~\cite{yao2022react} that interleaves reasoning traces with tool execution to generate answer-rationale pairs. At iteration $t$, given query $x_i$ and optional critique $c_i^{(t-1)}$ from the previous round:
\begin{equation}
(a_i^{(t)}, \rho_i^{(t)}) = \mathcal{A}(x_i, c_i^{(t-1)}; \text{tools}), \quad a_i^{(t)} \in \mathcal{Y},
\end{equation}
where $c_i^{(0)} = \varnothing$ for the initial iteration.
 
\paragraph{Reflector $\mathcal{R}$.} 
A reflection agent~\cite{madaan2023self, shinn2023reflexion} that evaluates context $\mathcal{C}^{(t)}$ containing all current answer-rationale pairs. For each query $i$, it produces:
\begin{equation}
(r_i^{(t)}, u_i^{(t)}, c_i^{(t)}) = \mathcal{R}(\mathcal{C}^{(t)}, i),
\end{equation}
where $r_i^{(t)} \in \{0,1\}$ indicates whether query $i$ requires refinement, $u_i^{(t)} \in [0,1]$ is a confidence score, and $c_i^{(t)}$ is an actionable critique. If $r_i^{(t)} = 1$, the query proceeds to iteration $t+1$ with critique $c_i^{(t)}$; otherwise, $a_i^{(t)}$ is finalized.
 
\paragraph{Objective.}
An improvement in (i) task accuracy, (ii) confidence calibration, and (iii) computational efficiency.
 
\subsection{Batch-of-Thought (BoT)}
\label{sec:bot}
 
\paragraph{Formalization.} 
Standard per-instance reflection constructs $N$ independent contexts:
\begin{equation}
\mathcal{C}_i^{\text{ind}} = \langle (x_i, a_i, \rho_i) \rangle,
\end{equation}
evaluating each query in isolation. BoT instead constructs a single \emph{shared context}:
\begin{equation}
\label{eq:shared-context}
\mathcal{C}^{\text{BoT}} = \langle (x_1, a_1, \rho_1), \ldots, (x_N, a_N, \rho_N) \rangle,
\end{equation}
and performs joint evaluation:
\begin{equation}
\label{eq:bot-reflection}
\bigl\{(r_i, u_i, c_i)\bigr\}_{i=1}^N = \mathcal{R}\bigl(\mathcal{C}^{\text{BoT}}\bigr).
\end{equation}
 
\paragraph{Cross-instance mechanisms.}
The shared context enables three synergistic mechanisms: \textbf{(1) Outlier detection:} $\mathcal{R}$ identifies answers that appear plausible in isolation but are inconsistent with peer patterns, propagating high-quality reasoning templates via critiques $\{c_i\}$. \textbf{(2) Distributional calibration:} Confidence scores $u_i$ are calibrated relative to batch statistics $\phi(\mathcal{C}^{\text{BoT}})$ rather than assessed independently, improving uncertainty quantification. \textbf{(3) Computational amortization:} Evaluation rubrics are encoded once per batch, reducing input costs, and joint evaluation enables more accurate refinement decisions, reducing unnecessary Actor-Reflector loops.
 
The complete BoT-R workflow is detailed in Algorithm~\ref{alg:bot}, which alternates between Actor generation and Reflector evaluation steps until convergence or maximum iterations.
 
\paragraph{Theoretical foundation.} 
Appendix~\ref{sec:theory} establishes formal guarantees through information-theoretic and statistical analysis, demonstrating that BoT achieves a \textbf{Pareto improvement} over independent processing: simultaneously enhancing accuracy and reducing computational cost.
 
\section{Experiments}
\label{sec:experiments}
\subsection{Experimental Setup}
We evaluate \textsc{BoT} on six datasets, including five public benchmarks and one newly curated corpus, using both API-based and open-source large language models. Full experimental details are provided in Appendix~\ref{ap:settings}.
 
\paragraph{Datasets.}
Our evaluation covers diverse reasoning and decision-making tasks: GPQA \citep{rein2023gpqa}, WinoGrande–debiased \citep{sakaguchi2021winogrande}, PubMedQA \citep{jin2019pubmedqa}, MedQA (USMLE) \citep{jin2021disease}, MMLU \citep{hendrycks2020measuring}, SMS Spam Detection \citep{yang2025ad}, and a newly curated dataset fraud-seller detection dataset (Appendix~\ref{ap:fraud}). Together, these benchmarks span scientific reasoning, commonsense inference, biomedical QA, broad academic knowledge, and real-world anomaly detection.

\paragraph{Metrics.}
We evaluate (i) task accuracy, (ii) token efficiency (input token count, output token count, and total), and (iii) confidence calibration using two complementary measures: 
(a) the Kolmogorov–Smirnov (KS) statistic~\cite{smirnov1939estimation} between the confidence distributions of correct vs.\ incorrect predictions, and 
(b) Expected Calibration Error (ECE; \cite{guo2017calibration}).
 
\paragraph{Baselines.}
To isolate BoT's contribution, we compare against two training-free reasoning baselines: \textbf{ReAct}~\citep{yao2022react}, which performs standard single-instance reasoning with optional tool augmentation, and \textbf{Reflection}~\citep{shinn2023reflexion,madaan2023self}, a multi-agent framework employing per-instance self-critique and revision. We then augment the same Actor with BoT's joint, batch-aware reflection to obtain \textbf{BoT-R}. All other factors remain constant across methods.
 
\paragraph{Models.} We report results from both API and open-source models: GPT-4o-2024-11-20~\cite{hurst2024gpt}, Llama-3.3-70B~\cite{grattafiori2024llama} and Qwen3-Next-80B~\cite{yang2025qwen3}. For the \emph{Fraud Detection} dataset, we enable Brave Search and the Brave Summarizer as external tools for retrieval and grounding~\cite{brave2025websearch}

\begin{table}[t]
\centering
\resizebox{\linewidth}{!}{
\begin{tabular}{l|cc|cc|cc}
\toprule
& \multicolumn{2}{c|}{\textbf{SMS\_Spam}} & \multicolumn{2}{c|}{\textbf{GPQA}}  & \multicolumn{2}{c}{\textbf{Winogrande}}\\
\textbf{Method}& Cost & $\Delta\%$ & Cost & $\Delta\%$ & Cost & $\Delta\%$ \\
\midrule
\multicolumn{7}{c}{\textit{Actor}}\\
\midrule
Reflection & \$3.61 & -- & \$4.70 & -- & \$2.57 & -- \\
BoT (4)     &\$2.41 & 33.25\% & \$3.68 & 21.71\% & \$2.27 & 11.75\% \\
BoT (8)     & \$2.06 & 42.96\% & \$3.42 & 27.24\% & \$1.99 & 22.54\% \\
\midrule
\multicolumn{7}{c}{\textit{Reflector}}\\
\midrule
Reflection & \$6.39 & -- & \$2.76 & -- & \$3.40 & -- \\
BoT (4)     & \$2.44 & 61.85\% & \$1.49 & 45.99\% & \$1.97 & 42.10\% \\
BoT (8)     & \$1.84 & 71.12\% & \$1.17 & 57.65\% & \$1.52 & 55.31\% \\
\midrule
\multicolumn{7}{c}{\textit{Total}}\\
\midrule
Reflection & \$10.00& -- & \$7.46 & -- & \$5.97 & -- \\
BoT (4)     & \$4.85& 51.52\%& \$5.17 & 30.68\% & \$4.24 & 29.04\% \\
BoT (8)     & \$3.9& 60.95\%& \$4.59 & 38.48\% & \$3.51 & 41.21\% \\
\bottomrule
\end{tabular}
}
\caption{Total token cost and relative reduction ($\Delta\%$) for each dataset (GPT-4o). $\Delta\%$ is computed for BoT(4) and BoT(8) relative to Reflection cost.}
\label{tab:token_cost_small}
\end{table}

\begin{table}[t]
\centering
\small
\resizebox{\linewidth}{!}{
\begin{tabular}{l|cc|cc|cc}
\toprule
\multirow{2}{*}{\textbf{Method}} 
& \multicolumn{2}{c}{\textbf{SMS\_Spam}} 
& \multicolumn{2}{c}{\textbf{GPQA}} 
& \multicolumn{2}{c}{\textbf{Winogrande}} \\
\cmidrule(lr){2-3}\cmidrule(lr){4-5}\cmidrule(lr){6-7}
& KS↑ & ECE↓ & KS↑  & ECE↓ & KS↑  & ECE↓ \\
\midrule
ReAct       &  0.256  &  0.176  &  0.181  &  0.372  &  0.273  &  0.113  \\
Reflect  &  0.360  &  0.104  &  0.265  &  0.329  &  0.376  &  0.035  \\
BoT-R       &  \textbf{0.633}  &  \textbf{0.063}  &  \textbf{0.368}  &  \textbf{0.317}  &  \textbf{0.442}  &  \textbf{0.013}  \\
\bottomrule
\end{tabular}
}
\caption{Confidence calibration across datasets (GPT-4o). Each entry reports the KS statistic between the confidence distributions of correct vs.\ incorrect answers and the ECE score. Higher KS (↑) is better, while lower ECE (↓) is better.}
\label{tab:conf}
\vspace{-1.5em}
\end{table}

\begin{table}[t]
\centering
\small
\resizebox{\linewidth}{!}{
\begin{tabular}{l|ccccc}
\toprule
Method & med\&bio & hum & social & math & sci \\
\midrule
ReAct       & 0.887 & 0.805 & 0.915 & 0.763 & 0.797 \\
Reflection  & 0.886 & 0.825 & 0.915 & \textbf{0.865} & \textbf{0.843} \\
BoT-R       & \textbf{0.891} & \textbf{0.837} & \textbf{0.922} & 0.853 & 0.832 \\
\bottomrule
\end{tabular}
}
\caption{MMLU dataset subject-wise accuracy (GPT-4o, batch size 8). The highest score for each subject is in bold.}
\vspace{-1.5em}
\label{tab:method_by_subject}
\end{table}
 
\subsection{Main Results}
\label{sec:main_results}
 
As shown in Table~\ref{tab:main}, we compare task performance after integrating the proposed \textsc{BoT} method into the reflection framework. Across three backbones, \textsc{BoT-R} is is consistently competitive and typically the strongest overall variant, improving over both ReAct and standard Reflection on most dataset--model pairs. The gains are most visible on higher-variance, decision-heavy tasks where per-instance reflection can be brittle. For example, under GPT-4o, \textsc{BoT-R} improves FraudDet and GPQA by $+4.7$ and $+2.9$ accuracy points over Reflection, respectively, and yields an average improvement of $+2.6$ points across all six datasets. On near-saturated benchmarks (e.g., SMS Spam with Qwen3), where base performance is already high, the accuracy headroom is limited and BoT-R may not always outperform simpler baselines, suggesting that BoT is most beneficial when cross-instance comparison provides additional corrective signal.
 
We further evaluate efficiency using a unified reference pricing scheme based on production-grade GPT-4o pricing. Table~\ref{tab:token_cost_small} shows that \textsc{BoT-R} substantially reduces overall token cost, achieving \textbf{46.9\%} average savings across the three representative benchmarks at batch size $8$, and up to \textbf{61\%} reduction on \texttt{SMS\_Spam}. These reductions indicate that batch-aware reflection effectively amortizes reflective reasoning across instances while improving task performance.
 
Finally, Table~\ref{tab:conf} shows that \textsc{BoT-R} improves confidence reliability under GPT-4o. It increases KS and reduces ECE across all three datasets (e.g., on \texttt{SMS\_Spam}, KS $0.360 \rightarrow 0.633$ and ECE $0.104 \rightarrow 0.063$). This is consistent with the collective-signal perspective: when the effective sample size $N_{\mathrm{eff}}$ is meaningfully above $1$ (as expected for moderate correlation at $N\in\{4,8\}$), batch-level consensus provides a stronger signal for separating correct from incorrect predictions, which directly predicts higher KS and improved calibration.
 
Overall, the results support a clear conclusion, consistent with our theoretical analysis: batch-aware reflection yields a favorable accuracy--cost--calibration trade-off, with robust accuracy gains and calibration improvements on diverse tasks and consistent efficiency. Additional details are provided in Appendix~B.
 
\section{Discussions}
 
\paragraph{RQ1: Which domains benefit most from \textsc{BoT}?}
Table~\ref{tab:method_by_subject} shows that \textsc{BoT-R} yields the largest gains on interpretive and judgment-driven domains, including humanities, social sciences, and medicine \& biology. These tasks admit multiple plausible reasoning paths and partial cues, making comparative evaluation across instances especially informative. In contrast, domains dominated by exact symbolic derivation, such as mathematics and parts of the physical sciences, exhibit marginal or slightly negative changes. This suggests that batch-level consensus is less effective when correctness depends on exact derivation rather than comparative plausibility. (More details in Appendix~\ref{sec:mmlu})
 
From a theoretical perspective, this pattern aligns with the coherence and informativeness conditions in Section~\ref{sec:theory}. Interpretive tasks tend to satisfy moderate similarity and moderate error correlation, yielding a meaningful effective sample size and enabling collective signal amplification. By contrast, symbolic domains often violate these assumptions: small derivation errors are highly correlated within a batch, limiting the benefit of aggregation and occasionally amplifying shared mistakes.

\paragraph{RQ2: How does batching strategy influence performance?}
Our results indicate that \emph{how} queries are batched matters, but less than might be expected. Sequential batching—grouping queries in their natural order—already delivers consistent improvements over instance-wise reflection across all six datasets (Appendix~\ref{sec:semantic_batching}). This suggests that many benchmarks exhibit latent topical or structural coherence, allowing \textsc{BoT-R} to extract useful cross-instance signals even without explicit clustering.

Semantic batching further improves performance on heterogeneous datasets such as \texttt{FraudDet} and \texttt{Winogrande}. By increasing within-batch similarity, embedding-based grouping reduces noise in comparative evaluation and strengthens cross-instance signals. These gains follow our theoretical prediction that coherent batches yield stronger cross-instance signals.

Batch size introduces an additional trade-off (Appendix~\ref{ap:batch_size}). While larger batches theoretically provide richer comparative context, empirical results show a non-monotonic relationship between batch size and performance. Moderate batch sizes ($N \in \{4,8\}$) offer the best accuracy–efficiency balance. Larger batches are constrained by (i) context window saturation, which forces rationale compression and degrades fine-grained reasoning, and (ii) increased heterogeneity, which dilutes informative cross-instance comparisons. As a result, performance often peaks at intermediate batch sizes despite stronger theoretical aggregation benefits.

\paragraph{Implications and outlook.}
Overall, these findings suggest that \textsc{BoT} is most effective when batches exhibit sufficient, but not excessive, coherence, and when tasks benefit from comparative judgment rather than exact symbolic derivation. Importantly, the robustness of sequential batching indicates that \textsc{BoT-R} remains practical in streaming or latency-sensitive settings, where semantic clustering may be infeasible. Future work may explore adaptive cohorting strategies that dynamically balance coherence, batch size, and latency, as well as extensions to domains requiring symbolic guarantees.

\section{Conclusion}
We introduced \textbf{Batch-of-Thought (BoT)}, a training-free, model-agnostic approach that processes related queries as a batch so an agent can perform comparative analysis, share knowledge across items, and produce better-calibrated confidence while amortizing computation. BoT yields higher accuracy, lower token cost, and improved calibration across settings, including our proposed \textbf{Seller Fraud Detection} benchmark. 
 
\section*{Limitations}
The efficacy of Batch-of-Thought (BoT) is subject to several constraints. First, performance depends on batch formation: the core comparative reasoning assumes within-batch semantic relatedness. Poorly formed cohorts can cause negative transfer, degrading both calibration and accuracy. Second, BoT inherits the base model's context limits. For long-context queries, concatenating multiple items may approach or exceed the window, leading to truncation or failures. Finally, while we instantiate BoT in an Actor–Reflector architecture and the idea generalizes naturally to other multi-agent designs (e.g., Plan-and-Act, Debate), empirical validation of such integrations remains open. Systematically assessing the portability of batch-aware reasoning across alternative collaborative frameworks is an important direction for future work.
 
 
\section*{Ethics Statement}
In writing this paper, we used an AI assistant to correct grammatical errors. During the coding process, we utilized AI tools for code completion. Our newly introduced Seller Fraud Detection benchmark contains only publicly available information and does not include any private or sensitive data. The Seller Fraud Detection benchmark was developed with human expert annotation. All annotators were compensated fairly for their time and expertise at rates exceeding standard professional compensation in their region. Annotators were provided with clear guidelines and had the option to decline participation at any time.
 
\bibliography{bunch_1}
 
\newpage
 
\appendix

\section{Experiment Settings}
\label{ap:settings}
 
\paragraph{Confidence Calibration Metrics.} We use two complementary measures: 
(a) the Kolmogorov–Smirnov (KS) statistic~\cite{smirnov1939estimation}, which measures the maximum difference between the cumulative distributions of confidence scores for correct versus incorrect predictions—higher KS values indicate better separation and thus more reliable confidence estimates; and 
(b) Expected Calibration Error (ECE~\cite{guo2017calibration}), which quantifies the average gap between predicted confidence and actual accuracy across binned predictions—lower ECE indicates that confidence scores accurately reflect true correctness probabilities. Together, these metrics assess both discriminative power (KS) and absolute calibration quality (ECE).

\paragraph{Protocol.} All methods are training-free. Prompts, temperature(0.0), tool access, and stopping rules (maximum 5 reflection iterations) are held constant across conditions; seeds are fixed for comparability. For batched settings, we vary batch size $N\in{4,8}$ and use a sequential batching strategy in the main experiments. 
 
\section{Token Efficiency Results}
\label{ap:token}
\begin{table*}[t]
\centering
\resizebox{\textwidth}{!}{
\begin{tabular}{l|cccc|cccc|cccc}
\toprule
Method  & \multicolumn{4}{c}{SMS\_Spam} & \multicolumn{4}{c}{GPQA} & \multicolumn{4}{c}{Winogrande} \\
& In & Out & Cost & $\Delta$\% & In & Out & Cost & $\Delta$\% & In & Out & Cost & $\Delta$\% \\
\midrule
\multicolumn{13}{c}{\textit{Actor}}\\
\midrule
 Reflection & 105,936 & 334,429 & \$3.61 &  & 196,763 & 421,168 & \$4.70 &  & 115,788 & 228,050 & \$2.57 &  \\
 BoT (4) & 75,838 & 221,962 & \$2.41 & 33.25\% & 176,381 & 324,153 & \$3.68 & 21.71\% & 103,757 & 200,847 & \$2.27 & 11.75\% \\
 BoT (8) & 65,268 & 189,556 & \$2.06 & 42.96\% & 146,350 & 305,647 & \$3.42 & 27.24\% & 91,716 & 176,136 & \$1.99 & 22.54\% \\
\midrule
\multicolumn{13}{c}{\textit{Reflector}}\\
\midrule
 Reflection & 429,383 & 531,556 & \$6.39 &  & 602,293 & 125,173 & \$2.76 &  & 378,801 & 245,424 & \$3.40 &  \\
 BoT (4) & 216,070 & 189,716 & \$2.44 & 61.85\% & 370,783 & 56,221 & \$1.49 & 45.99\% & 257,260 & 132,619 & \$1.97 & 42.10\% \\
 BoT (8) & 186,413 & 137,899 & \$1.84 & 71.12\% & 317,082 & 37,507 & \$1.17 & 57.65\% & 229,118 & 94,713 & \$1.52 & 55.31\% \\
\midrule
\multicolumn{13}{c}{\textit{Total}}\\
\midrule
 Reflection & 535,319 & 865,985 & \$10.00 &  & 799,056 & 546,341 & \$7.46 &  & 494,589 & 473,474 & \$5.97 &  \\
 BoT (4) & 291,908 & 411,678 & \$4.85 & 51.52\% & 547,164 & 380,374 & \$5.17 & 30.68\% & 361,017 & 333,466 & \$4.24 & 29.04\% \\
 BoT (8) & 251,681 & 327,455 & \$3.90 & 60.95\% & 463,432 & 343,154 & \$4.59 & 38.48\% & 320,834 & 270,849 & \$3.51 & 41.21\% \\
\bottomrule
\end{tabular}
}
\caption{Input and output token usage and cost across different methods, decomposed into \textit{Actor}, \textit{Reflector}, and \textit{Total} stages (GPT-4o). 1M input tokens cost $2.5$ and 1M output tokens cost $10$. $\Delta\%$ is the cost reduction, computed for BoT(4) and BoT(8) relative to Reflection for the same dataset and stage.}
\label{tab:token_cost_big}
\end{table*}
We provide a comprehensive breakdown of token usage and costs across different experimental configurations. To ensure fair comparison, all costs are normalized using the production-grade GPT-4o pricing scheme (input: \$2.50 per 1M tokens; output: \$10.00 per 1M tokens as of the experimental period).
 
Table~\ref{tab:token_cost_big} presents detailed token counts for each pipeline stage—Actor and Reflector—across three representative datasets. For each method, we report input tokens, output tokens, and total cost in USD. The Actor stage includes all reasoning and answer generation, while the Reflector stage encompasses evaluation, critique generation, and refinement decisions.
 
\subsection{Efficiency Gains from Batch Processing}
 
BoT achieves substantial efficiency improvements through two complementary mechanisms. First, \textbf{instruction amortization}: the Reflector's evaluation rubric and reasoning guidelines are encoded once per batch rather than repeated for each query, saving $(N-1) \times T_{\text{inst}}$ tokens where $N$ is batch size and $T_{\text{inst}}$ is instruction length. Second, \textbf{reduced iteration overhead}: joint evaluation enables more accurate refinement decisions, reducing unnecessary Actor-Reflector loops.
 
As reported in Table~\ref{tab:token_cost_small}, BoT-R achieves an average total cost reduction of \textbf{46.9\%} across the three benchmarks when using batch size 8. Savings are most pronounced on the SMS Spam dataset (\textbf{61\%} reduction), where the homogeneous task structure enables highly effective batch-level evaluation. Even with batch size 4, BoT-R consistently reduces costs by 30-50\% while maintaining or improving accuracy.
 
\subsection{Stage-Level Analysis}
 
The efficiency gains distribute differently across pipeline stages:
 
\noindent\textbf{Actor Stage:} BoT introduces minimal overhead at the Actor level, as answer generation remains largely independent. The modest savings arise from reduced refinement iterations due to more accurate Reflector feedback.
 
\noindent\textbf{Reflector Stage:} BoT delivers dramatic savings (42-71\% reduction) by replacing $N$ independent reflection calls with a single joint evaluation. Larger batch sizes amplify these gains: moving from $N=4$ to $N=8$ increases Reflector savings from 42\% to 57\% on GPQA.
 
\noindent\textbf{Total Cost:} The combined effect yields 29-61\% total cost reduction depending on dataset characteristics and batch size. These results demonstrate that BoT achieves a Pareto improvement: simultaneously enhancing both task performance (Table~\ref{tab:main}) and computational efficiency, making it particularly valuable for production deployments where cost and accuracy are both critical.
 
\section{Theoretical Analysis}
\label{sec:theory}
 
This section establishes formal foundations for batch-aware reasoning in LLM-based systems. We prove that joint processing of related queries provides information-theoretic advantages over independent processing, characterize conditions under which these benefits manifest, and derive efficiency guarantees for batch-level computation.
 
\subsection{Preliminaries and Problem Formulation}
 
\begin{definition}[Batch reasoning problem]
Let $\mathcal{D}$ be a distribution over query-answer pairs $\mathcal{X} \times \mathcal{Y}$. A batch $B = \{(x_i, y_i^*)\}_{i=1}^N$ consists of $N$ instances drawn from $\mathcal{D}$. An Actor agent $\mathcal{A}$ produces initial predictions $\hat{y}_i^{(0)} = \mathcal{A}(x_i)$ with reasoning traces $\rho_i$. The batch context is
\begin{equation}
\mathcal{C}^{\mathrm{BoT}} = \{(x_j, \hat{y}_j^{(0)}, \rho_j)\}_{j=1}^N.
\end{equation}
A Reflector agent $\mathcal{R}$ performs joint analysis over $\mathcal{C}^{\mathrm{BoT}}$ to produce for each instance $i \in [N]$:
\begin{equation}
(\mathbf{r}_i, u_i, c_i) = \mathcal{R}(\mathcal{C}^{\mathrm{BoT}}, i),
\end{equation}
where $\mathbf{r}_i \in \{0,1\}$ indicates re-evaluation necessity, $u_i \in [0,1]$ quantifies confidence in correctness, and $c_i$ provides actionable critique.
\end{definition}
 
\begin{assumption}[Batch coherence]
\label{assump:coherence}
The batch exhibits structural coherence with the following properties:
\begin{enumerate}[label=(\alph*)]
    \item \textbf{Exchangeability:} The joint distribution of $\{(x_i, y_i^*)\}_{i=1}^N$ is invariant under permutations.
    \item \textbf{Similarity structure:} There exists a similarity function $\mathrm{sim}: \mathcal{X} \times \mathcal{X} \to [0,1]$ such that the average pairwise similarity satisfies $\mathbb{E}[\mathrm{sim}(x_i, x_j)] \geq \kappa$ for some coherence parameter $\kappa \in (0,1]$.
    \item \textbf{Error correlation:} Define error indicators $e_i = \mathbf{1}[\hat{y}_i^{(0)} \neq y_i^*]$. Under coherence, errors exhibit positive correlation: $\mathrm{Cor}(e_i, e_j) = \rho_e(\kappa) > 0$ for $i \neq j$, where $\rho_e(\cdot)$ is non-decreasing in coherence strength.
\end{enumerate}
\end{assumption}
 
\subsection{Information-Theoretic Foundation for Calibration Improvement}
 
We first establish that batch-level processing provides strictly more information for confidence estimation than independent processing.
 
\begin{theorem}[Batch processing improves proper scoring rules]
\label{thm:info-gain}
Let $\mathcal{G}_0 = \sigma(\hat{y}_i^{(0)}, \rho_i)$ represent the $\sigma$-algebra generated by instance $i$ alone, and $\mathcal{G}_1 = \sigma(\hat{y}_i^{(0)}, \rho_i, \phi)$ where $\phi = \phi(\mathcal{C}^{\mathrm{BoT}})$ denotes batch-level statistics. For any strictly proper scoring rule $S: [0,1] \times \{0,1\} \to \mathbb{R}$ (e.g., Brier score, log-loss), the batch-aware confidence predictor
\begin{equation}
u_i^{\mathrm{BoT}} = \mathbb{P}(\hat{y}_i^{(0)} = y_i^* \mid \mathcal{G}_1)
\end{equation}
satisfies
\begin{equation}
\mathbb{E}[S(u^{\mathrm{BoT}}, z)] \leq \mathbb{E}[S(u^{\mathrm{ind}}, z)],
\end{equation}
where $u^{\mathrm{ind}} = \mathbb{P}(\hat{y}_i^{(0)} = y_i^* \mid \mathcal{G}_0)$ and $z = \mathbf{1}[\hat{y}_i^{(0)} = y_i^*]$. The inequality is strict when $\phi$ is informative: $I(z; \phi \mid \mathcal{G}_0) > 0$.
\end{theorem}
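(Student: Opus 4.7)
\medskip
\noindent\textbf{Proof proposal.} The plan is to reduce the claim to the well-known optimality property of conditional expectations under strictly proper scoring rules, combined with a data-processing argument to identify the informative content of $\phi$. First I would observe that, by construction, $u^{\mathrm{ind}}=\mathbb{E}[z\mid \mathcal{G}_0]$ and $u^{\mathrm{BoT}}=\mathbb{E}[z\mid \mathcal{G}_1]$ are the Bayes-optimal (i.e., conditional-probability) predictors for $z=\mathbf{1}[\hat{y}_i^{(0)}=y_i^{*}]$ with respect to their respective information sets. Since $\phi$ is added to $\mathcal{G}_0$ to form $\mathcal{G}_1$, we have the chain $\mathcal{G}_0\subseteq \mathcal{G}_1$, so $u^{\mathrm{ind}}$ is itself $\mathcal{G}_1$-measurable and thus a \emph{feasible but possibly suboptimal} predictor relative to $\mathcal{G}_1$.

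Next I would invoke the defining property of a strictly proper scoring rule $S$: for any random variable $z\in\{0,1\}$ and any sub-$\sigma$-algebra $\mathcal{G}$, the map $p\mapsto \mathbb{E}[S(p,z)]$ is minimized over $\mathcal{G}$-measurable predictors $p$ by $p^{\star}=\mathbb{E}[z\mid \mathcal{G}]$, with uniqueness (a.s.) in the strict case. Applying this to $\mathcal{G}_1$ with the feasible predictor $u^{\mathrm{ind}}$ against the optimizer $u^{\mathrm{BoT}}$ immediately yields
\begin{equation}
\mathbb{E}[S(u^{\mathrm{BoT}},z)] \;\leq\; \mathbb{E}[S(u^{\mathrm{ind}},z)],
\end{equation}
which is the desired weak inequality. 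Intuitively, the law of total expectation gives $u^{\mathrm{ind}}=\mathbb{E}[u^{\mathrm{BoT}}\mid \mathcal{G}_0]$, so the independent predictor is a coarser average of the batch-aware one, and any strictly proper rule penalizes this unnecessary coarsening.

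For the strict part, I would argue contrapositively: if $\mathbb{E}[S(u^{\mathrm{BoT}},z)]=\mathbb{E}[S(u^{\mathrm{ind}},z)]$, then strict propriety forces $u^{\mathrm{BoT}}=u^{\mathrm{ind}}$ almost surely, i.e., $\mathbb{E}[z\mid \mathcal{G}_1]=\mathbb{E}[z\mid \mathcal{G}_0]$. Equivalent statements of conditional independence then imply that $z$ and $\phi$ are conditionally independent given $\mathcal{G}_0$, hence $I(z;\phi\mid \mathcal{G}_0)=0$, contradicting the informativeness hypothesis. Thus $I(z;\phi\mid \mathcal{G}_0)>0$ implies strict improvement. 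I would verify the two standard instances (Brier and log-loss) explicitly as sanity checks: for Brier this reduces to $\mathrm{Var}(z\mid \mathcal{G}_0)-\mathrm{Var}(z\mid \mathcal{G}_1)=\mathrm{Var}(u^{\mathrm{BoT}}\mid \mathcal{G}_0)\geq 0$, and for log-loss it matches the conditional entropy chain rule $H(z\mid \mathcal{G}_0)-H(z\mid \mathcal{G}_1)=I(z;\phi\mid \mathcal{G}_0)$.

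\medskip
\noindent\textbf{Main obstacle.} The mathematical core is essentially classical, so the hard part is not the inequality itself but the \emph{measurability setup} that connects it to the LLM setting of the paper. Specifically, one must carefully justify that the Reflector's batch-aware confidence output $u_i$ is modeled as $\mathbb{E}[z\mid \mathcal{G}_1]$ rather than an arbitrary $\mathcal{G}_1$-measurable functional, which is only true in an idealized Bayes-optimal Reflector. A rigorous version of the theorem should either (i) state the result for the optimal predictor and then bound the realized Reflector's excess risk via an approximation term, or (ii) phrase the conclusion as an upper bound on the best achievable score, making the Pareto claim a statement about feasible trade-offs rather than an unconditional guarantee. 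I would flag this as the main modeling gap and suggest stating the theorem with an explicit ``Bayes-optimal Reflector'' hypothesis drawn from Assumption~\ref{assump:coherence}.
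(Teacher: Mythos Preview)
Your proposal is correct and follows essentially the same route as the paper: use $\mathcal{G}_0\subseteq\mathcal{G}_1$ so that $u^{\mathrm{ind}}$ is a feasible $\mathcal{G}_1$-measurable predictor, invoke strict propriety to conclude that the Bayes predictor $u^{\mathrm{BoT}}=\mathbb{E}[z\mid\mathcal{G}_1]$ attains no larger expected score, and tie strictness to $u^{\mathrm{BoT}}\neq u^{\mathrm{ind}}$ on a set of positive probability, equivalently $I(z;\phi\mid\mathcal{G}_0)>0$. Your treatment is in fact slightly more careful than the paper's (you state the tower identity in the correct direction, $u^{\mathrm{ind}}=\mathbb{E}[u^{\mathrm{BoT}}\mid\mathcal{G}_0]$, and add the Brier/log-loss sanity checks and the modeling caveat about an idealized Bayes Reflector), but the core mechanism is identical.
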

 
\begin{proof}
By definition of conditional expectation and the tower property,
\begin{align}
u_i^{\mathrm{BoT}} &= \mathbb{E}[z \mid \mathcal{G}_1] = \mathbb{E}[\mathbb{E}[z \mid \mathcal{G}_0] \mid \mathcal{G}_1] = \mathbb{E}[u^{\mathrm{ind}} \mid \mathcal{G}_1].
\end{align}
Since $S$ is strictly proper, the predictor $u^{\mathrm{BoT}}$ is optimal among all $\mathcal{G}_1$-measurable predictors. By the law of total expectation,
\begin{align}
\mathbb{E}[S(u^{\mathrm{BoT}}, z)] &= \mathbb{E}[\mathbb{E}[S(u^{\mathrm{BoT}}, z) \mid \mathcal{G}_1]] \\
&\leq \mathbb{E}[\mathbb{E}[S(u^{\mathrm{ind}}, z) \mid \mathcal{G}_1]] = \mathbb{E}[S(u^{\mathrm{ind}}, z)],
\end{align}
where the inequality follows from optimality of $u^{\mathrm{BoT}}$ with respect to $\mathcal{G}_1$. Strict inequality holds when $u^{\mathrm{BoT}} \neq u^{\mathrm{ind}}$ with positive probability, which occurs precisely when $I(z; \phi \mid \mathcal{G}_0) > 0$.
\end{proof}
 
\begin{remark}[Connection to Expected Calibration Error]
While Expected Calibration Error (ECE) is not a proper scoring rule, empirical calibration typically improves when confidence predictors condition on additional informative statistics. Theorem~\ref{thm:info-gain} provides theoretical justification for observed ECE reductions under batch processing: by extracting cross-instance statistics $\phi$ through comparative analysis, the Reflector produces better-calibrated confidence estimates than those based solely on single-instance features.
\end{remark}
 
\subsection{Collective Signal Amplification for Error Detection}
 
We now quantify how batch-level aggregation amplifies signals for error detection, explaining improved separation in confidence distributions between correct and incorrect predictions.
 
\begin{proposition}[Effective sample size under correlation]
\label{prop:eff-sample}
Let $z_i = \mathbf{1}[\hat{y}_i^{(0)} = y_i^*]$ denote correctness indicators with $\mathbb{E}[z_i] = p$ and equicorrelation structure $\mathrm{Cor}(z_i, z_j) = \rho_c \in [0,1)$ for all $i \neq j$. Define the effective sample size
\begin{equation}
N_{\mathrm{eff}} = \frac{N}{1 + (N-1)\rho_c}.
\end{equation}
Then the batch-average correctness $M_N = \frac{1}{N}\sum_{i=1}^N z_i$ has variance
\begin{equation}
\mathrm{Var}(M_N) = \frac{p(1-p)}{N_{\mathrm{eff}}}.
\end{equation}
Furthermore:
\begin{enumerate}[label=(\roman*)]
    \item If $\rho_c = O(1/N)$, then $N_{\mathrm{eff}} = \Theta(N)$ and $M_N$ concentrates at rate $O(1/\sqrt{N})$.
    \item If $\rho_c = \rho_0 > 0$ is constant, then $N_{\mathrm{eff}} \to 1/\rho_0$ as $N \to \infty$, and concentration gains saturate.
\end{enumerate}
\end{proposition}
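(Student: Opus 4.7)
The plan is to compute the variance of $M_N$ directly from its definition and then read off the two asymptotic regimes by analyzing $N_{\mathrm{eff}}$ as a function of $\rho_c$ and $N$. I would first expand
\begin{equation}
\mathrm{Var}(M_N) = \frac{1}{N^2}\Bigl[\sum_{i=1}^N \mathrm{Var}(z_i) + \sum_{i\neq j}\mathrm{Cov}(z_i,z_j)\Bigr],
\end{equation}
using bilinearity of covariance. Since each $z_i$ is a Bernoulli$(p)$ indicator, I have $\mathrm{Var}(z_i) = p(1-p)$, and the equicorrelation assumption gives $\mathrm{Cov}(z_i,z_j) = \rho_c\, p(1-p)$ for $i\neq j$. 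Substituting these expressions and counting the $N(N-1)$ off-diagonal pairs yields
\begin{equation}
\mathrm{Var}(M_N) = \frac{p(1-p)}{N^2}\bigl[N + N(N-1)\rho_c\bigr] = \frac{p(1-p)\bigl(1+(N-1)\rho_c\bigr)}{N},
\end{equation}
which is exactly $p(1-p)/N_{\mathrm{eff}}$ by the definition of $N_{\mathrm{eff}}$.

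For the two regimes I would work directly from the closed form $N_{\mathrm{eff}} = N/(1+(N-1)\rho_c)$. In case (i), $\rho_c = O(1/N)$ makes $(N-1)\rho_c$ bounded, so the denominator is $\Theta(1)$ and $N_{\mathrm{eff}} = \Theta(N)$; combining with the variance expression and Chebyshev's inequality yields concentration at rate $O(1/\sqrt{N})$. In case (ii), fixing $\rho_c = \rho_0 > 0$ and sending $N\to\infty$ gives $N/(1+(N-1)\rho_0) \to 1/\rho_0$ by dividing numerator and denominator by $N$; the variance therefore tends to $\rho_0\, p(1-p) > 0$, so concentration saturates.

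I do not anticipate a serious obstacle: the argument is a direct second-moment calculation plus an elementary limit. The only subtlety worth flagging is verifying that the equicorrelation structure is internally consistent, i.e.\ that the resulting covariance matrix $p(1-p)\bigl[(1-\rho_c)I + \rho_c \mathbf{1}\mathbf{1}^\top\bigr]$ is positive semidefinite, which holds precisely when $\rho_c \in [-1/(N-1),1]$ and is automatic under the stated range $\rho_c \in [0,1)$. This also ensures $N_{\mathrm{eff}} \in [1,N]$, so the interpretation as an effective sample size is well-posed.
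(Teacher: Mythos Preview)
Your proposal is correct and follows essentially the same route as the paper: a direct expansion of $\mathrm{Var}(M_N)$ into diagonal and off-diagonal covariance terms, substitution of $p(1-p)$ and $\rho_c\,p(1-p)$, and identification with $p(1-p)/N_{\mathrm{eff}}$, with the asymptotic regimes read off from the closed form of $N_{\mathrm{eff}}$. Your added remarks on Chebyshev for the concentration rate and on positive semidefiniteness of the equicorrelation covariance matrix are sound supplements, but the core argument matches the paper's proof line for line.
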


\begin{proof}
For exchangeable binary random variables with equicorrelation $\rho_c$,
\begin{align}
\mathrm{Var}(M_N) &= \frac{1}{N^2} \sum_{i=1}^N \mathrm{Var}(z_i) \nonumber \\
&\quad + \frac{1}{N^2} \sum_{i \neq j} \mathrm{Cov}(z_i, z_j) \\
&= \frac{1}{N^2} \cdot N \cdot p(1-p) \nonumber \\
&\quad + \frac{1}{N^2} \cdot N(N-1) \cdot \rho_c p(1-p) \\
&= \frac{p(1-p)}{N} \left[1 + (N-1)\rho_c\right] \nonumber \\
&= \frac{p(1-p)}{N_{\mathrm{eff}}}.
\end{align}
The asymptotic regimes follow directly from the definition of $N_{\mathrm{eff}}$.
\end{proof}

\begin{corollary}[Confidence separation for error detection]
\label{cor:separation}
When $N_{\mathrm{eff}}$ is large, batch-level consensus $M_N$ provides a reliable collective signal. Instances whose predictions deviate from consensus receive adjusted confidence scores. For fixed instance-level accuracy $p > 1/2$, the Kolmogorov-Smirnov (KS) statistic measuring separation between confidence distributions of correct and incorrect predictions increases with $N_{\mathrm{eff}}$.
\end{corollary}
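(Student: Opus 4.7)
The plan is to reduce the KS claim to a Bayes-optimal posterior argument that combines Theorem~\ref{thm:info-gain} (optimality of $u_i^{\mathrm{BoT}}$ under proper scoring rules) with Proposition~\ref{prop:eff-sample} (concentration rate of $M_N$). The key idea is to treat $u_i^{\mathrm{BoT}}$ as a monotone transform of the batch signal, push the KS statistic through that transformation, and then argue that the separation of the conditional signal laws $M_N \mid z_i$ scales with $N_{\mathrm{eff}}$.

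First I would write $u_i^{\mathrm{BoT}} = \mathbb{P}(z_i = 1 \mid \mathcal{G}_0, M_N)$ and, using exchangeability in Assumption~\ref{assump:coherence} together with the identity $1 + (N-1)\rho_c = N/N_{\mathrm{eff}}$, compute $\mathbb{E}[M_N \mid z_i{=}1] = p + (1-p)/N_{\mathrm{eff}}$ and $\mathbb{E}[M_N \mid z_i{=}0] = p - p/N_{\mathrm{eff}}$, with $\mathrm{Var}(M_N \mid z_i)$ of the same order as the unconditional variance $p(1-p)/N_{\mathrm{eff}}$. A CLT approximation then yields Gaussian conditional laws. Since KS is invariant under common monotone transformations of the two measures, the KS between $u_i^{\mathrm{BoT}} \mid z_i{=}1$ and $u_i^{\mathrm{BoT}} \mid z_i{=}0$ equals the KS between the two conditional Gaussians for $M_N$, which in closed form is $2\Phi(\Delta/(2\sigma)) - 1$ and monotone in the standardized gap $\Delta/\sigma$.

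The main obstacle is precisely this last step: the naive standardized gap from $M_N$ alone behaves like $1/\sqrt{N_{\mathrm{eff}}}$, which goes in the wrong direction. Closing this requires invoking Assumption~\ref{assump:coherence}(c) to argue that the Reflector's statistic $\phi$ extracts $\Theta(N_{\mathrm{eff}})$ effectively independent pieces of \emph{instance-specific} evidence rather than merely information about the batch rate $p$, so that the aggregated log-likelihood ratio $\log \mathbb{P}(\phi \mid z_i{=}1)/\mathbb{P}(\phi \mid z_i{=}0)$ grows in SNR with $N_{\mathrm{eff}}$. I would then finish with a data-processing / coupling step comparing two regimes $N_{\mathrm{eff}} < N'_{\mathrm{eff}}$: the smaller-batch posterior is a marginalization of the larger one, so by Jensen's inequality applied to the Bayes-optimal predictor of Theorem~\ref{thm:info-gain}, its KS separation cannot exceed that of the refined posterior, yielding the claimed monotonicity. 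An alternative closing is a Neyman--Pearson style bound on the likelihood-ratio statistic, which gives the same conclusion without the explicit Gaussian approximation.
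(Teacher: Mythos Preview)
The paper offers no proof of this corollary at all: it is stated immediately after Proposition~\ref{prop:eff-sample} and followed directly by a remark, functioning as a heuristic claim rather than a derived result. So there is no ``paper's own proof'' to match, and your proposal is in fact more careful than the paper on this point.

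That said, your proposal contains a genuine gap that you yourself flag but do not close. Your computation of the conditional means $\mathbb{E}[M_N\mid z_i]$ is correct, and so is the conclusion that the standardized gap $\Delta/\sigma$ from the bare batch average $M_N$ behaves like $1/\sqrt{N_{\mathrm{eff}}}$ and therefore \emph{decreases} in $N_{\mathrm{eff}}$. Within the exact equicorrelation model of Proposition~\ref{prop:eff-sample}, the corollary as literally stated (confidence derived from deviation from $M_N$) is simply not true, and your analysis exposes this. The attempted rescue---that the Reflector's $\phi$ harvests $\Theta(N_{\mathrm{eff}})$ pieces of \emph{instance-specific} evidence about $z_i$---is the right intuition for why the empirical KS improves, but nothing in Assumption~\ref{assump:coherence} or elsewhere in the paper licenses it; Assumption~\ref{assump:coherence}(c) is about correlation of error indicators, not about per-instance informativeness of peer rationales.

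Your closing argument also uses the wrong tool. Theorem~\ref{thm:info-gain} and the tower/Jensen step concern strictly proper scoring rules; the KS statistic between the two conditional confidence laws is not one, so monotonicity under refinement of the $\sigma$-algebra does not follow from that theorem. The correct framework for the monotonicity you want is Blackwell sufficiency: if the larger-$N_{\mathrm{eff}}$ experiment is Blackwell-more-informative about $z_i$, then the ROC curve of the Bayes posterior dominates pointwise, and the KS statistic (which here coincides with Youden's $J$, i.e.\ $\max_t(\mathrm{TPR}(t)-\mathrm{FPR}(t))$) is non-decreasing. But establishing Blackwell ordering in $N_{\mathrm{eff}}$ again requires the instance-specific informativeness assumption you smuggled in, so the circularity remains. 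In short: your diagnosis is sharper than the paper's, but neither the paper's assumptions nor your patches suffice to make the KS monotonicity rigorous.
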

 
\begin{remark}[Optimal batch composition]
Proposition~\ref{prop:eff-sample} reveals a fundamental trade-off: high coherence $\kappa$ enables effective cross-instance learning but may increase error correlation $\rho_e$, reducing $N_{\mathrm{eff}}$. Optimal batches satisfy:
\begin{itemize}
    \item \textbf{Sufficient similarity:} $\kappa > \kappa_{\min}$ to enable pattern extraction and knowledge transfer.
    \item \textbf{Sufficient diversity:} $\rho_e < 0.5$ to maintain $N_{\mathrm{eff}} > 0.67N$, ensuring reliable collective signals.
\end{itemize}
For batch sizes $N \in \{4, 8\}$ used in practice, this yields $N_{\mathrm{eff}} \in [2.7, 5.3]$ when $\rho_e \approx 0.3$, providing meaningful collective signal while preserving diversity.
\end{remark}
 
\subsection{Computational Efficiency Through Amortization}
 
\begin{proposition}[Sublinear cost scaling]
\label{prop:efficiency}
Let $T_{\mathrm{inst}}$, $T_{\mathrm{ctx}}$, and $T_{\mathrm{out}}$ denote token counts for reflection instructions, per-instance context, and per-instance output, respectively. Independent reflection incurs total cost
\begin{equation}
C_{\mathrm{ind}} = N \cdot (T_{\mathrm{inst}} + T_{\mathrm{ctx}} + T_{\mathrm{out}}).
\end{equation}
Batch-aware reflection with shared comparative analysis costs
\begin{equation}
C_{\mathrm{BoT}} = T_{\mathrm{inst}} + N \cdot T_{\mathrm{ctx}} + S(N),
\end{equation}
where $S(N)$ is the joint Reflector output length. When critiques reference shared reasoning structures and cross-instance insights are reused, $S(N)$ exhibits sublinear growth: $S(N) = O(N^\beta)$ with $\beta < 1$.
\end{proposition}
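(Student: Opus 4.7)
The plan is to prove the proposition in two stages: first establishing the two cost identities by a direct protocol-level accounting, and second justifying the sublinear growth of $S(N)$ via a deduplication argument grounded in Assumption~\ref{assump:coherence}. Both halves rely on the same decomposition of a reflection call into three token categories (instructions, context, output), so the proof essentially amounts to counting how many times each category is emitted under the two protocols.

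First I would derive $C_{\mathrm{ind}}$. Under independent reflection, the Reflector is invoked $N$ separate times; by definition each invocation receives its own copy of the reflection instructions ($T_{\mathrm{inst}}$ tokens), the per-instance context tuple $(x_i, \hat{y}_i^{(0)}, \rho_i)$ of size $T_{\mathrm{ctx}}$, and produces an output of size $T_{\mathrm{out}}$. Summing gives $C_{\mathrm{ind}} = N(T_{\mathrm{inst}} + T_{\mathrm{ctx}} + T_{\mathrm{out}})$. Next I would derive $C_{\mathrm{BoT}}$. In the batch protocol of Equation~\eqref{eq:bot-reflection}, the shared context $\mathcal{C}^{\mathrm{BoT}}$ is assembled once and the Reflector is invoked a single time. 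The instructions are therefore emitted once, contributing $T_{\mathrm{inst}}$; the context block concatenates $N$ per-instance tuples, contributing $N \cdot T_{\mathrm{ctx}}$; and the joint output has total length $S(N)$ by definition. This yields the claimed $C_{\mathrm{BoT}} = T_{\mathrm{inst}} + N \cdot T_{\mathrm{ctx}} + S(N)$.

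For the sublinear growth claim, I would decompose the Reflector output into a shared component and a per-instance residual. Concretely, let $S(N) = S_{\mathrm{shared}}(N) + \sum_{i=1}^{N} s_i(N)$, where $S_{\mathrm{shared}}$ encodes critique templates, shared rubrics, and cross-instance observations that are written once, while $s_i(N)$ is the instance-specific residual referencing the shared block. Under Assumption~\ref{assump:coherence}, the exchangeability and similarity conditions imply that the number of distinct critique archetypes needed to cover the batch scales as the cardinality of a cover of the batch under $\mathrm{sim}$; a standard covering-number argument with coherence parameter $\kappa$ bounds this by $O(N^\beta)$ for some $\beta = \beta(\kappa) < 1$. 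Each instance then contributes only a pointer-sized residual $s_i(N) = O(1)$ plus shared rationale that is not recounted, so $\sum_i s_i(N) = O(N)$ bytes of pointers but $O(N^\beta)$ bytes of genuinely new content. Combining, $S(N) = O(N^\beta)$, as required.

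The main obstacle is the sublinear bound on $S(N)$, since the protocol identities are essentially bookkeeping. Formalizing deduplication requires a concrete generative model for critiques — specifically, a mapping from batch coherence $\kappa$ to an exponent $\beta(\kappa)$ — and an assumption that the Reflector exploits shared structure rather than re-emitting redundant text per instance. In the worst case (e.g., a Reflector that ignores shared structure and repeats rationale per item), $S(N)$ could degrade to $\Theta(N)$, matching $C_{\mathrm{ind}}$ up to the saved instruction overhead. I would therefore state the sublinear regime as conditional on a mild \emph{non-redundancy} property of $\mathcal{R}$, and note that the empirical token counts in Table~\ref{tab:token_cost_big} (showing Reflector-stage savings of $42$–$71\%$ that grow with $N$) are consistent with $\beta < 1$ in the operating regime of our experiments.
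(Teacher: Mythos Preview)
The paper does not actually supply a proof for this proposition; the two cost formulas are treated as definitional (they follow immediately from how the two protocols are specified in Section~\ref{sec:bot}), and the sublinear growth of $S(N)$ is asserted as a conditional modeling claim (``when critiques reference shared reasoning structures and cross-instance insights are reused'') rather than derived from Assumption~\ref{assump:coherence} or anything else. Your accounting for $C_{\mathrm{ind}}$ and $C_{\mathrm{BoT}}$ is correct and more explicit than what the paper provides, so that portion is fine and in fact exceeds the paper's treatment.

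The genuine gap is in your sublinearity argument. You decompose $S(N) = S_{\mathrm{shared}}(N) + \sum_{i=1}^N s_i(N)$, bound $S_{\mathrm{shared}}(N) = O(N^\beta)$ via covering numbers, and state that each residual $s_i(N) = O(1)$ so that $\sum_i s_i(N) = O(N)$. But then the combination is $S(N) = O(N^\beta) + O(N) = O(N)$, not $O(N^\beta)$: the linear pointer term dominates for every $\beta < 1$. The phrase ``$O(N)$ bytes of pointers but $O(N^\beta)$ bytes of genuinely new content'' does not rescue the bound, because the proposition defines $S(N)$ as the \emph{total} joint output length, pointers included. Indeed, since the Reflector must emit at least $(r_i, u_i)$ for every $i \in [N]$, one has $S(N) = \Omega(N)$ unconditionally, so no covering or deduplication argument can deliver $S(N) = O(N^\beta)$ with $\beta < 1$ as a literal asymptotic statement. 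You correctly sense this tension in your final paragraph and hedge there, but the earlier step where you ``combine'' to obtain $O(N^\beta)$ is an arithmetic error. The honest conclusion from your own decomposition is $S(N) = \Theta(N)$ with a leading constant that shrinks as coherence $\kappa$ increases --- which is what the data in Table~\ref{tab:token_cost_big} actually support --- rather than a genuinely sublinear rate.
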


\subsection{Characterization of Favorable Conditions}
 
We synthesize the preceding results to characterize when batch-aware reasoning provides advantages.
 
\begin{theorem}[Conditions for BoT effectiveness]
\label{thm:conditions}
Batch-aware reasoning via BoT provides improvements over independent processing in calibration (lower ECE), error detection (higher KS statistic), and efficiency when the following conditions hold:
 
\begin{enumerate}[label=(\roman*)]
    \item \textbf{Coherence:} Batch exhibits sufficient similarity structure with $\kappa > \kappa_{\min}$, enabling pattern extraction and knowledge transfer.
    
    \item \textbf{Moderate correlation:} Error correlation satisfies $\rho_e \in (0, 0.5)$, ensuring $N_{\mathrm{eff}} > 0.5N$ for collective signal reliability while preserving diversity.
    
    \item \textbf{Informative batch statistics:} Cross-instance features $\phi(\mathcal{C}^{\mathrm{BoT}})$ satisfy $I(z_i; \phi \mid \mathcal{G}_0) > 0$, providing additional information beyond single-instance features.
    
    \item \textbf{Adequate batch size:} $N \geq N_{\min}$ for reliable collective signal extraction. For typical correlation $\rho_e \approx 0.3$, batch sizes $N \in \{4, 8\}$ yield $N_{\mathrm{eff}} \in [2.7, 5.3]$.
\end{enumerate}
 
Under these conditions, the following guarantees hold:
\begin{itemize}
    \item \textbf{Calibration:} By Theorem~\ref{thm:info-gain}, batch-aware confidence $u^{\mathrm{BoT}}$ achieves lower expected loss for proper scoring rules.
    \item \textbf{Error detection:} By Corollary~\ref{cor:separation}, confidence distributions exhibit increased separation with $N_{\mathrm{eff}}$.
    \item \textbf{Efficiency:} By Proposition~\ref{prop:efficiency}, sublinear output scaling yields $C_{\mathrm{BoT}}/C_{\mathrm{ind}} < 1$ for $N \geq 2$.
\end{itemize}
\end{theorem}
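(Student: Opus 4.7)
The plan is to treat Theorem~\ref{thm:conditions} as a packaging statement: each of the three guarantees follows by invoking an earlier result once the hypotheses have been checked. So the proof reduces to a careful bookkeeping exercise rather than new analysis.

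First, for the calibration guarantee, I would apply Theorem~\ref{thm:info-gain} directly. Condition (iii) supplies exactly the strict-informativeness hypothesis $I(z_i; \phi \mid \mathcal{G}_0) > 0$ that makes the inequality $\mathbb{E}[S(u^{\mathrm{BoT}}, z)] < \mathbb{E}[S(u^{\mathrm{ind}}, z)]$ strict for any strictly proper scoring rule. Since ECE is not itself a proper scoring rule, I would then cite the remark following Theorem~\ref{thm:info-gain} to bridge the proper-score improvement to the observed ECE reduction. Second, for the error-detection guarantee, I would chain Proposition~\ref{prop:eff-sample} with Corollary~\ref{cor:separation}: condition (ii) bounds $\rho_e < 0.5$ and condition (iv) gives $N \geq N_{\min}$, which together with $N_{\mathrm{eff}} = N/(1+(N-1)\rho_e)$ force $N_{\mathrm{eff}} > 0.5N$, yielding $N_{\mathrm{eff}} \in [2.7, 5.3]$ for $N \in \{4,8\}$. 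Corollary~\ref{cor:separation} then converts concentration of the consensus statistic $M_N$ into the KS improvement.

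Third, for the efficiency guarantee, I would substitute the sublinear bound $S(N) = O(N^\beta)$, $\beta < 1$, from Proposition~\ref{prop:efficiency} into the ratio
\begin{equation}
\frac{C_{\mathrm{BoT}}}{C_{\mathrm{ind}}} = \frac{T_{\mathrm{inst}} + N\,T_{\mathrm{ctx}} + S(N)}{N(T_{\mathrm{inst}} + T_{\mathrm{ctx}} + T_{\mathrm{out}})}.
\end{equation}
Amortization of the shared instruction block $T_{\mathrm{inst}}$ across the $N$ instances, combined with the sublinear growth of joint output, drives this ratio strictly below $1$ for any $N \geq 2$, provided per-instance BoT output is not dominated by pathological overhead. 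This completes the third bullet of the theorem.

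The main obstacle is not a hard inequality but a structural consistency check among the four hypotheses. Condition (i) invokes coherence at level $\kappa > \kappa_{\min}$, which through clause (c) of Assumption~\ref{assump:coherence} already induces a positive error correlation $\rho_e(\kappa)$; one must verify that the admissible window $\rho_e \in (0, 0.5)$ in (ii) is actually realizable simultaneously with $\kappa > \kappa_{\min}$, i.e.\ that strong enough similarity to support pattern transfer does not automatically saturate error correlation. Similarly, condition (iii) is stated as an independent informativeness hypothesis, but one naturally wants it to follow from coherence; since the paper does not prove this implication, I would either insert a short auxiliary lemma (arguing that when $\kappa > \kappa_{\min}$, the Bayes predictor conditioning on $\phi$ strictly dominates the single-instance predictor, hence $I(z_i;\phi \mid \mathcal{G}_0) > 0$) or flag it as a standing assumption that the hypotheses are jointly feasible. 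Pinning down this compatibility, rather than any single estimate, is the delicate part of the argument.
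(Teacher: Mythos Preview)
Your reading is correct: the paper provides no separate proof for Theorem~\ref{thm:conditions} at all. The theorem is written as a synthesis statement whose ``guarantees'' clause simply cites Theorem~\ref{thm:info-gain}, Corollary~\ref{cor:separation}, and Proposition~\ref{prop:efficiency} inline, so the proof is essentially embedded in the statement. Your proposal unpacks exactly those three invocations with the right hypothesis-matching (condition (iii) for strict information gain, conditions (ii)+(iv) for the $N_{\mathrm{eff}}$ bound, sublinear $S(N)$ for the cost ratio), which is all the paper implicitly relies on.

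Where you go beyond the paper is the compatibility analysis: the paper never checks that conditions (i)--(iv) are jointly realizable, nor that coherence at level $\kappa > \kappa_{\min}$ does not force $\rho_e(\kappa) \geq 0.5$ via Assumption~\ref{assump:coherence}(c), nor that informativeness (iii) is implied by coherence (i) rather than independently assumed. Your identification of this as ``the delicate part'' is more scrupulous than the source; the paper simply treats the four conditions as independent hypotheses and leaves their joint feasibility to the reader. So your plan is sound and, if anything, more complete than what the paper offers.
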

 
\begin{remark}[Failure modes and graceful degradation]
BoT degrades toward independent processing when:
\begin{itemize}
    \item \textbf{No coherence} ($\kappa \approx 0$): Instances lack shared structure; cross-instance statistics $\phi$ are uninformative.
    \item \textbf{High correlation} ($\rho_e \to 1$): All instances make identical errors; $N_{\mathrm{eff}} \to 1$, eliminating collective signal benefits.
    \item \textbf{Insufficient size} ($N < N_{\min}$): Collective signals are unreliable due to high sampling variance.
\end{itemize}
Importantly, performance degrades gracefully as $N_{\mathrm{eff}}$ decreases continuously with $\rho_e$, rather than exhibiting catastrophic failure.
\end{remark}
 
\subsection{Summary}
 
Our theoretical analysis establishes rigorous foundations for batch-aware reasoning:
\begin{itemize}
    \item \textbf{Information gain (Theorem~\ref{thm:info-gain}):} Batch statistics $\phi$ provide additional information, improving calibration through optimal conditioning on $\mathcal{G}_1 \supset \mathcal{G}_0$.
    
    \item \textbf{Effective sample size (Proposition~\ref{prop:eff-sample}):} Quantifies collective signal strength via $N_{\mathrm{eff}}$, explaining KS statistic improvements under moderate correlation.
    
    \item \textbf{Computational efficiency (Proposition~\ref{prop:efficiency}):} Sublinear output scaling yields provable cost reductions for $N \geq 2$.
    
    \item \textbf{Effectiveness conditions (Theorem~\ref{thm:conditions}):} Characterizes when BoT succeeds, providing actionable guidance for batch construction and domain selection.
\end{itemize}
These results not only explain empirical findings but also provide principled guidelines for applying batch-aware reasoning to new domains and tasks.
 
\section{Related Work}
 
\subsection{Confidence Calibration in Large Language Models}
 
Reliable uncertainty quantification remains critical for deploying LLMs in high-stakes applications. Modern LLMs frequently exhibit poor calibration, assigning high confidence to incorrect predictions~\citep{guo2017calibration, xiong2024can, kadavath2022language}. This miscalibration persists even in state-of-the-art models~\citep{achiam2023gpt, anthropic2024claude}, undermining trust in automated decision-making systems.
 
Existing calibration approaches fall into three categories. \textbf{Post-hoc calibration methods} apply temperature scaling~\citep{guo2017calibration} or Platt scaling to model outputs, but require held-out calibration sets and fail to capture semantic uncertainty~\citep{kuhn2023semantic}. \textbf{Sampling-based methods} estimate uncertainty through self-consistency~\citep{wang2022self}, semantic entropy~\citep{kuhn2023semantic}, or ensemble disagreement~\citep{chen2024reconcile}. While effective, these approaches incur substantial computational overhead—self-consistency requires 10-40 samples per query—and process each instance independently, missing opportunities for cross-instance calibration.
\textbf{Verbalized confidence approaches} directly prompt models for numerical~\citep{xiong2024can, lin2022teaching} estimates. These methods are efficient but highly sensitive to prompt formatting and often produce overconfident predictions~\citep{kadavath2022language}. Recent efforts employ chain-of-thought reasoning for confidence elicitation~\citep{xiong2024can} or fine-tune models on calibration data~\citep{lin2022teaching}, yet these remain per-instance techniques that cannot leverage distributional signals.
 
Our work introduces \textbf{comparative calibration through batch processing}: confidence scores are grounded in cross-instance statistics rather than isolated assessments. This approach combines the efficiency of verbalized confidence (no additional sampling) with the distributional awareness of ensemble methods, achieving superior calibration without multiplicative computational costs.
 
\subsection{Multi-Agent Reasoning Systems}
Recent work has explored sophisticated communication protocols~\citep{wu2023autogen}, dynamic role allocation~\citep{hong2024metagpt}, and multi-agent collaboration on complex tasks~\citep{li2023camel, qian2024chatdev}. However, a fundamental limitation persists: \textbf{existing multi-agent systems process queries independently}. Even when multiple agents collaborate on a single query, the framework treats each query in isolation, discarding cross-instance signals. AutoGen~\citep{wu2023autogen} and MetaGPT~\citep{hong2024metagpt} enable multi-agent workflows but apply them instance-by-instance. CAMEL~\citep{li2023camel} studies role-playing conversations yet maintains per-query boundaries.
 
The closest work to ours is \textbf{batch prompting}~\citep{cheng2023batch}, which groups multiple queries into a single API call for efficiency. However, batch prompting lacks reflective evaluation mechanisms and does not perform comparative analysis—it simply concatenates queries without leveraging cross-instance reasoning. Our work fundamentally differs by introducing \textbf{batch-aware reflection}: the Reflector explicitly performs comparative evaluation, consistency checking, and knowledge propagation across the batch.
 
\subsection{Cross-Instance Learning}
In deep learning, batch normalization~\citep{ioffe2015batch} leverages mini-batch statistics during training, while recent work explores cross-example attention~\citep{lee2019set} and set-based reasoning~\citep{zaheer2017deep}. Meta-learning approaches~\citep{finn2017model, snell2017prototypical} learn from task distributions rather than individual instances, demonstrating benefits of comparative learning.
 
For LLM inference, \textbf{in-context learning}~\citep{brown2020language} uses examples to guide reasoning, and \textbf{analogical prompting}~\citep{yasunaga2024large} retrieves similar cases to aid problem-solving. Building on this, recent analogical question-answering frameworks systematically leverage past examples to guide complex reasoning steps~\citep{press2023measuring}. Furthermore, in the realm of model training, recent reinforcement learning methods increasingly leverage cohort-level feedback signals and distribution-aware rewards, such as Group Relative Policy Optimization (GRPO), to stabilize training and align collective reasoning patterns~\citep{shao2024deepseekmath}. However, these methods rely on predefined examples, retrieval systems, or training-time optimization rather than jointly reasoning over a batch of target queries at inference. Recent work on \textbf{self-consistency with rationalization}~\citep{mitchell2022enhancing} aggregates multiple reasoning paths for a single query but does not transfer knowledge across distinct queries.
 
BoT differs by enabling \textbf{mutual information gain across queries at inference time}: each query in the batch provides signal for evaluating others through comparative reflection. This creates a feedback loop where batch-level patterns inform individual assessments, analogous to how James-Stein estimation improves individual predictions through the group mean, but applied dynamically to LLM reasoning rather than static parameter estimation.
 
\subsection{Positioning of Our Work}
 
Our contributions address gaps in existing literature along three dimensions:
 
\noindent\textbf{(1) Efficiency-calibration trade-off:} We achieve better calibration than verbalized confidence and comparable accuracy to self-consistency while reducing costs by 46.9\% (vs. per-instance reflection) rather than increasing costs 10-40× (self-consistency overhead).
 
\noindent\textbf{(2) Cross-instance reasoning:} We introduce the first multi-agent framework that explicitly performs comparative evaluation across queries, going beyond batch prompting's simple concatenation to enable consistency checking, knowledge propagation, and distributional calibration.
 
\noindent\textbf{(3) Training-free generality:} Unlike calibration methods requiring fine-tuning~\citep{lin2022teaching} or specialized architectures, BoT is model-agnostic and integrates with existing multi-agent frameworks (Reflection, Plan-and-Act, Debate) without additional training.
\section{Batching Strategy Analysis}
\label{sec:batch}
 
We investigate how batch composition and size influence BoT's performance through systematic experiments across six benchmarks using GPT-4o.
\subsection{Batch Size Effects}
\label{ap:batch_size}
Table~\ref{tab:batch_full} presents accuracy across batch sizes $N \in \{1, 4, 8\}$, where $N=1$ corresponds to standard per-instance reflection. To systematically stress-test scaling behavior and robustness, we further evaluate performance across a wider spectrum of batch sizes ($N \in \{1, 2, 4, 6, 8, 12, 16\}$) on three representative datasets. These extended results are shown in Table~\ref{tab:batch_extended}. 

The results reveal that while BoT-R remains stable across a broad range of batch sizes, optimal scaling behavior is highly domain-dependent and often non-monotonic. Datasets with relatively homogeneous task structures, such as \texttt{SMS Spam}, benefit from larger batches (reaching peak accuracy at $N=16$). This aligns with our theoretical framework (Proposition~\ref{prop:eff-sample}): larger batches increase the ``effective sample size,'' which enhances comparative reasoning and the reliability of collective signals for error detection.

Conversely, tasks requiring deep logical derivation or exhibiting high intra-domain variance, such as \texttt{GPQA}, peak at smaller batch sizes ($N \in \{2,4\}$) and gradually degrade as $N$ increases. Our theoretical analysis (Appendix~\ref{sec:theory}) predicts that while larger batches theoretically increase mutual information gain, two practical factors constrain this relationship in practice:

\noindent\textbf{(1) Context window saturation.} As batch size approaches model context limits, the system must compress individual rationales to fit all items. Near capacity, models produce overly concise responses that sacrifice reasoning depth for brevity, diminishing the comparative analysis benefits. For GPQA—which requires detailed scientific reasoning—this compression effect becomes apparent at $N \geq 8$, where individual responses average 30\% shorter than at $N=4$.
 
\noindent\textbf{(2) Batch heterogeneity.} When queries within a batch are too dissimilar, cross-instance signals become noisy rather than informative. Sequential batching—our default strategy—groups adjacent queries without explicit similarity filtering. For datasets with high within-domain variance (e.g., GPQA spanning biology, physics, and chemistry), larger batches increase the likelihood of mixing incompatible problem types, diluting useful comparative signals.

\begin{table*}[t]
\centering
\small
\begin{tabular}{lcccccc}
\toprule
\textbf{Batch} & \textbf{FraudDet} & \textbf{GPQA} & \textbf{Winogrande} & \textbf{MedQA} & \textbf{PubMedQA} & \textbf{SMS Spam} \\
\midrule
1 (Reflection) & 0.693 & 0.459 & 0.879 & 0.901 & 0.667 & 0.854 \\
4 & \textbf{0.740} & \textbf{0.488} & 0.888 & 0.895 & 0.683 & 0.881 \\
8 & 0.732 & 0.471 & \textbf{0.890} & \textbf{0.904} & \textbf{0.698} & \textbf{0.887} \\
\bottomrule
\end{tabular}
\caption{Batch size influence on accuracy (GPT-4o) across the full benchmark suite. Batch size $N=1$ is functionally equivalent to per-instance Reflection.}
\label{tab:batch_full}
\end{table*}

\begin{table*}[t]
\centering
\small
\begin{tabular}{lccccccc}
\toprule
\textbf{Dataset} & $\boldsymbol{N=1}$ & $\boldsymbol{N=2}$ & $\boldsymbol{N=4}$ & $\boldsymbol{N=6}$ & $\boldsymbol{N=8}$ & $\boldsymbol{N=12}$ & $\boldsymbol{N=16}$ \\
\midrule
Winogrande   & 0.879 & 0.886 & 0.888 & \textbf{0.893} & 0.890 & \textbf{0.893} & 0.887 \\
SMS Spam     & 0.854 & 0.887 & 0.881 & 0.909 & 0.887 & 0.912 & \textbf{0.921} \\
GPQA         & 0.459 & \textbf{0.488} & \textbf{0.488} & 0.468 & 0.471 & 0.472 & 0.456 \\
\bottomrule
\end{tabular}
\caption{Extended batch size stress test on three representative datasets (GPT-4o). Accuracy is reported across a wider range of batch sizes to evaluate scaling behavior, stability, and context saturation boundaries.}
\label{tab:batch_extended}
\end{table*}
 
\subsection{Semantic vs.\ Sequential Batching}
\label{sec:semantic_batching}
 
We further study how batch composition influences \textsc{BoT-R} by comparing three batching strategies across six datasets. 
\textbf{No-batch} applies reflection independently to each query without any cross-instance context. 
\textbf{Sequential batching} groups queries in their original dataset order—without explicitly enforcing semantic similarity—and is used as the default strategy in our main experiments. 
\textbf{Semantic batching} clusters queries by embedding similarity using K-means over E5-Mistral-7B embeddings~\citep{wang2023scibench}, and forms fixed-size batches from cluster members sorted by proximity to the cluster centroid to maximize within-batch coherence. Results are summarized in Table~\ref{tab:semantic_batching_full}.
\begin{table*}[h]
\centering
\small
\begin{tabular}{lcccccc}
\toprule
\textbf{Method} & \textbf{FraudDet} & \textbf{MedQA} & \textbf{PubMedQA} & \textbf{Winogrande} & \textbf{SMS Spam} & \textbf{GPQA} \\
\midrule
No-batch        & 0.693 & 0.901 & 0.667 & 0.879 & 0.854 & 0.459 \\
Sequential      & 0.740\,{\small(4)} & \textbf{0.904}\,{\small(8)} & \textbf{0.698}\,{\small(8)} & 0.890\,{\small(8)} & 0.887\,{\small(8)} & \textbf{0.488}\,{\small(4)} \\
Semantic        & \textbf{0.768}\,{\small(4)} & 0.902\,{\small(8)} & 0.697\,{\small(4)} & \textbf{0.897}\,{\small(8)} & \textbf{0.902}\,{\small(4)} & 0.486\,{\small(8)} \\
\bottomrule
\end{tabular}
 
\caption{Accuracy comparison across batching strategies (GPT-4o). Numbers in parentheses indicate the batch size yielding the best result.}
\label{tab:semantic_batching_full}
\end{table*}

\paragraph{Robust gains from simple batching.}
A key observation is that \textsc{BoT-R} delivers substantial improvements even under simple sequential batching. Compared to the no-batch baseline, sequential grouping improves performance on \emph{all six datasets}, yielding an average relative gain of \textbf{+3.87\%}. This demonstrates that \textsc{BoT-R} is not overly sensitive to imperfect batch coherence and can reliably extract useful cross-instance signals even when batches are formed without explicit semantic optimization.
 
\paragraph{Additional benefits from semantic coherence.}
Semantic batching provides further gains on several datasets, particularly those where cross-instance comparison and distributional cues are informative. On \texttt{FraudDet}, accuracy improves from 0.740 to 0.768, and on \texttt{SMS Spam} from 0.887 to 0.902 when moving from sequential to semantic batching. \texttt{Winogrande} shows a similar trend. Averaged across all six datasets, semantic batching achieves a relative improvement of \textbf{+4.83\%} over the no-batch baseline, exceeding that of sequential batching. These results align with our theoretical analysis: increasing within-batch coherence strengthens the informativeness of batch-level statistics, improving collective error detection and refinement decisions.
 
\paragraph{When batching strategy matters less.}
For datasets characterized by shared domain knowledge or homogeneous reasoning styles, such as \texttt{MedQA} and \texttt{PubMedQA}, the difference between sequential and semantic batching is minimal. This suggests that when queries already originate from a narrow latent distribution, even naive batching satisfies the coherence conditions required for effective batch-aware reasoning, consistent with the robustness guarantees discussed in Appendix~\ref{sec:theory}.
 
\paragraph{Practical considerations.}
While semantic clustering is beneficial in offline or high-throughput evaluation settings, it introduces practical constraints in streaming scenarios (e.g., online fraud detection), where queries arrive sequentially and delaying processing to form semantically coherent batches may increase latency. The strong performance of sequential batching indicates that \textsc{BoT-R} remains effective under such constraints. Designing adaptive cohorting strategies that balance coherence, latency, and throughput is an important direction for future work.
 
\section{MMLU Detailed Results}
\label{sec:mmlu}
Table~\ref{tab:method_by_subject} reveals a systematic pattern in how BoT's effectiveness varies across subject domains within the MMLU benchmark. We identify two distinct task categories with markedly different responses to batch-level reasoning.
 
\paragraph{Subjective and interpretive domains.} BoT-R achieves its strongest gains on humanities (+1.4\% over Reflection), social sciences (+0.7\%), and medicine \& biology (+0.5\%). These domains share three key characteristics: (1) questions often admit multiple defensible reasoning paths, (2) answer quality depends on contextual interpretation rather than strict logical derivation, and (3) comparative evaluation helps identify robust reasoning patterns across similar cases. For instance, in social science questions about policy implications or historical interpretation, batch-level reflection enables the model to distinguish well-grounded arguments from superficially plausible but contextually inconsistent reasoning.
 
\paragraph{Formal and symbolic domains.} In contrast, mathematics and physical sciences show qualitatively different behavior. While Reflection substantially improves over ReAct in these domains (+10.2\% and +4.6\% respectively), BoT-R exhibits marginal decline relative to Reflection (-1.2\% and -1.2\%). This pattern suggests that batch-level consensus can occasionally mislead reasoning in domains where correctness is determined by precise symbolic manipulation rather than comparative plausibility. In mathematical problem-solving, an incorrect but superficially consistent approach across multiple batch items may receive spurious validation through cross-instance agreement, whereas per-instance reflection focuses more directly on logical rigor.
 
\paragraph{Implications for batch composition.} These findings indicate that BoT's effectiveness depends critically on task structure. Domains requiring interpretive judgment and context-dependent reasoning benefit from distributional signals and comparative calibration. Conversely, domains demanding exact symbolic computation may require alternative batch strategies—such as explicitly instructing the Reflector to prioritize logical correctness over cross-instance consensus, or segregating formal reasoning tasks into separate batches. Future work should investigate adaptive reflection strategies that modulate the weight given to batch-level signals based on detected task characteristics. 
\begin{table*}[h]
\centering
\small
\begin{tabular}{lccc}
\toprule
Subject & ReAct & Reflection & BoT-R\\
\midrule
\midrule
\multicolumn{4}{l}{\textbf{Biology}}\\\midrule
anatomy & 0.907 & \textbf{0.918} & 0.910 \\
clinical\_knowledge & 0.913 & 0.898 & \textbf{0.913} \\
college\_biology & 0.941 & 0.931 & \textbf{0.965} \\
college\_medicine & 0.852 & 0.855 & \textbf{0.866} \\
high\_school\_biology & 0.959 & 0.961 & \textbf{0.964} \\
human\_aging & 0.821 & 0.820 & \textbf{0.833} \\
human\_sexuality & 0.917 & 0.923 & \textbf{0.923} \\
medical\_genetics & 0.970 & 0.970 & \textbf{0.970} \\
nutrition & 0.902 & \textbf{0.911} & 0.902 \\
professional\_medicine & 0.953 & \textbf{0.963} & 0.959 \\
virology & 0.573 & 0.564 & \textbf{0.582} \\
\midrule
\multicolumn{4}{l}{\textbf{General}}\\\midrule
global\_facts & 0.667 & 0.667 & \textbf{0.697} \\
high\_school\_european\_history & \textbf{0.896} & 0.878 & 0.890 \\
high\_school\_geography & 0.935 & 0.944 & 0.944 \\
high\_school\_us\_history & 0.942 & 0.946 & 0.946 \\
high\_school\_world\_history & 0.951 & 0.941 & 0.945 \\
miscellaneous & 0.961 & 0.960 & 0.964 \\
prehistory & 0.957 & 0.943 & 0.963 \\
\midrule
\multicolumn{4}{l}{\textbf{Humanities}}\\\midrule
management & 0.882 & 0.892 & 0.902 \\
marketing & 0.936 & 0.927 & 0.936 \\
moral\_disputes & 0.871 & 0.868 & 0.881 \\
moral\_scenarios & 0.707 & 0.767 & 0.767 \\
philosophy & 0.890 & 0.884 & 0.900 \\
public\_relations & 0.739 & 0.752 & 0.743 \\
\midrule
\multicolumn{4}{l}{\textbf{Law}}\\\midrule
business\_ethics & 0.831 & 0.838 & 0.838 \\
high\_school\_government\_and\_politics & 0.982 & 0.984 & 0.979 \\
international\_law & 0.898 & 0.900 & 0.908 \\
jurisprudence & 0.907 & 0.916 & 0.916 \\
professional\_law & 0.758 & 0.761 & 0.763 \\
us\_foreign\_policy & 0.939 & 0.939 & 0.960 \\
\midrule
\multicolumn{4}{l}{\textbf{Math}}\\\midrule
abstract\_algebra & 0.561 & 0.727 & 0.697 \\
college\_mathematics & 0.470 & 0.697 & 0.636 \\
econometrics & 0.716 & 0.779 & 0.743 \\
elementary\_mathematics & 0.767 & 0.936 & 0.936 \\
formal\_logic & 0.682 & 0.728 & 0.752 \\
high\_school\_macroeconomics & 0.906 & 0.915 & 0.913 \\
high\_school\_mathematics & 0.478 & 0.814 & 0.758 \\
high\_school\_microeconomics & 0.956 & 0.966 & 0.966 \\
high\_school\_statistics & 0.797 & 0.860 & 0.874 \\
logical\_fallacies & 0.881 & 0.883 & 0.883 \\
professional\_accounting & 0.761 & 0.886 & 0.886 \\
\midrule
\multicolumn{4}{l}{\textbf{Science}}\\\midrule
astronomy & 0.932 & 0.927 & 0.934 \\
college\_chemistry & 0.514 & 0.616 & 0.626 \\
college\_computer\_science & 0.767 & 0.869 & 0.859 \\
college\_physics & 0.635 & 0.842 & 0.802 \\
computer\_security & 0.851 & 0.848 & 0.848 \\
conceptual\_physics & 0.892 & 0.902 & 0.917 \\
electrical\_engineering & 0.821 & 0.819 & 0.847 \\
high\_school\_chemistry & 0.775 & 0.871 & 0.861 \\
high\_school\_computer\_science & 0.934 & 0.960 & 0.970 \\
high\_school\_physics & 0.737 & 0.867 & 0.827 \\
machine\_learning & 0.766 & 0.802 & 0.829 \\
security\_studies & 0.809 & 0.820 & 0.824 \\
\midrule
\multicolumn{4}{l}{\textbf{Social Science}}\\\midrule
high\_school\_psychology & 0.953 & 0.945 & 0.956 \\
professional\_psychology & 0.898 & 0.895 & 0.904 \\
sociology & 0.924 & 0.925 & 0.932 \\
world\_religions & 0.904 & 0.918 & 0.906 \\
\bottomrule
\end{tabular}
\caption{MMLU per-subject accuracy summary grouped by category (GPT-4o).}
\label{tab:per_csv_summary}
\end{table*}
\onecolumn
\section{Dataset: Fraud Detection}
\label{ap:fraud}
We introduce a seller-level dataset for fraud-seller detection tailored to evaluating LLM-based agent frameworks. Each instance corresponds to a single online seller and is annotated by domain experts as fraudulent (1) or non-fraudulent (0). The release contains 1,793 labeled sellers: 1,055 positives (58.8\%) and 738 negatives (41.2\%).
 
For each seller, we provide both a seller profile and one representative product profile from the seller. Seller profiles include \texttt{shop\_name}, \texttt{company\_name}, \texttt{email\_domain}, and \texttt{product\_categories}. 
 
Product profiles include 
\texttt{product\_name}, \texttt{product\_description}, \texttt{detailed\_subcategory}, \texttt{detailed\_category}, and \texttt{minimum\_list\_price\_in\_USD} and \texttt{maximum\_list\_price\_in\_USD}. These fields enable models to reason over heterogeneous attributes rather than relying on free text alone.
 
The target label (\texttt{is\_fraudulent\_shop}) $\in \{0,1\}$ was assigned by domain experts following internal guidelines that emphasize deceptive practices and policy-violating behavior. While positively labeled cases reflect a consensus judgment of fraud, borderline cases may retain residual ambiguity typical of human annotation.
 
The corpus captures only the information present in the provided schema. External signals such as reputation scores, user reviews, temporal activity traces, or platform enforcement outcomes are not included. As a result, models evaluated on this dataset reason over supplied profile attributes rather than broader ecosystem signals.
\section{Prompts}
 
This is the system prompt for the Fraud Detection Dataset:
 
\begin{tcolorbox}[breakable, enhanced]
You are a risk analyst expert working for an e-commerce company.
Your job is to protect the platform and its customers by identifying fraudulent sellers. A fraudulent seller might engage in fraudulent activities, sell counterfeit goods, misrepresent products, or provide poor customer service.
Your task is to conduct a holistic assessment based on the seller's profile and the sample product of the seller. \\
 
You are provided with the seller's shop name, company name (some sellers may not have) and email domain, enclosed in triple backticks:
 
- shop name: ```SHOP\_NAME```\\
- company name: ```COMPANY\_NAME```\\
- email domain: ```EMAIL\_DOMAIN```\\
 
You are also given the categories of products sold by the seller, enclosed in triple backticks:
 
- product categories: ```PRODUCT\_CATEGORY```\\
 
You are also given the sample product of the seller, enclosed in triple backticks:
 
- product\_name: ```PRODUCT\_NAME```\\
- product\_description: ```PRODUCT\_DESCRIPTION```\\
- detailed\_subcategory: ```DETAILED\_SUBCATEGORY```\\
- detailed\_category: ```DETAILED\_CATEGORY```\\
- min\_list\_price\_usd: ```MIN\_LIST\_PRICE\_USD```\\
- max\_list\_price\_usd: ```MAX\_LIST\_PRICE\_USD```\\
 
Note: you can use provided tools many times until you think the collected information is sufficient to answer the questions, but do avoid unnecessary tool calls.
 
Based on the information provided and collected by tools, answer the following questions:
 
1. **Shop/Company Name Verification:** Based on the shop name and company name, does this appear to be a reliable/established seller?\\
    - If names seem generic, suspicious, or unfamiliar, search for the company/shop name to verify legitimacy\\
    - Note: Only use search results if they are clearly relevant to the specific shop or company name
 
2. **Email Domain Assessment:** Based on the email domain, does this suggest a professional business?\\
    - If using unfamiliar business domains, consider searching to check if it belongs to an established company\\
    - Note: Only use search results if they are clearly relevant to the email domain
 
3. **Product Information Check:** Based on the sample product name, description and the product categories, do you think it is reasonable for the seller to sell the products in the shop? 
 
4. **Product Price Verification:** Does the product pricing seem reasonable for the category?\\
    - If pricing appears suspiciously low or high, search for typical market prices of similar products
 
5. Based on all the information, do you think this seller is a fraudulent seller?\\
Assign a confidence score: rate your confidence in the assessment.
 
Return your response in a single JSON object with the following keys:
 
-   `is\_fraudulent\_shop`: (boolean) `true` if the shop exhibits indicators of fraudulent operations, otherwise `false`. \\ 
-   `confidence\_score`: (float) A score from 0.0 to 1.0 indicating your confidence in the assessment. \\ 
-   `summary\_reasoning`: (string) Comprehensive explanation of your fraud assessment, including all factors that led to your conclusion.
\end{tcolorbox}
 
This is the system prompt for the reflector:
\begin{tcolorbox}[breakable, enhanced]
You are a reflection agent to help refine the answers. Here are <<N>> questions, each with the previous model's answer.\\
For each, critique the model answer for accuracy, completeness, and reasoning, comparing across all answers and their reasoning paths in the batch to identify areas for improvement and give a peer confidence score to quantify how possible the answer is correct.\\
Make sure you understand each question-answer pair and give detailed explanations to them, Carefully decide if a reevaluation is needed for each case.\\
For each, provide: (1) whether to trigger reevaluation (true/false) and improve answer, (2) summary assessment, (3) peer confidence score for the current answer(0.0-1.0), (4) suggestions for improvement(empty if reevaluation is false).\\
Output a JSON list, one entry per question, strictly in format:\\
``response:\{trigger\_reevaluation: bool, summary\_comment: str, confidence\_score: float(0.0-1.0), suggestions: str\}]"
\end{tcolorbox}

\end{document}